\def\eqref#1{equation~\ref{#1}}
\def\1{\bm{1}}
\DeclareMathAlphabet{\mathsfit}{\encodingdefault}{\sfdefault}{m}{sl}
\SetMathAlphabet{\mathsfit}{bold}{\encodingdefault}{\sfdefault}{bx}{n}
\title{Communication-Efficient Sampling for Distributed Training of Graph Convolutional Networks}
\author{
Peng Jiang\\
Department of Computer Science\\
The University of Iowa\\
Iowa City, IA 52242, USA \\
\texttt{peng-jiang@uiowa.edu} \\
\And
Masuma Akter Rumi\\
Department of Computer Science\\
The University of Iowa\\
Iowa City, IA 52242, USA \\
\texttt{masumaakter-rumi@uiowa.edu} \\
}
\newcommand{\norm}[1]{\left\lVert#1\right\rVert}
\newtheorem{theorem}{Theorem}
\begin{document}

\maketitle
\begin{abstract}
Training Graph Convolutional Networks (GCNs) is expensive as it needs to aggregate data recursively from neighboring nodes. 
  To reduce the computation overhead, previous works have proposed various neighbor sampling methods that estimate the aggregation result based on a small number of sampled neighbors. 
  Although these methods have successfully accelerated the training, they mainly focus on the single-machine setting. 
  As real-world graphs are large, training GCNs in distributed systems is desirable. 
  However, we found that the existing neighbor sampling methods do not work well in a distributed setting. 
  Specifically, a naive implementation may incur a huge amount of communication of feature vectors among different machines. 
  To address this problem, we propose a communication-efficient neighbor sampling method in this work. 
  Our main idea is to assign higher sampling probabilities to the local nodes so that remote nodes are accessed less frequently. 
  We present an algorithm that determines the local sampling probabilities and makes sure our {\em skewed} neighbor sampling does not affect much to the convergence of the training.  
  Our experiments with node classification benchmarks show that our method significantly reduces the communication overhead for distributed GCN training  with little accuracy loss. 
  
\end{abstract}
\section{Introduction}
Graph Convolutional Networks (GCNs) are powerful models for learning representations of attributed graphs. 
They have achieved great success in graph-based learning tasks such as node classification~\cite{kipf2017semi, duran2017learning}, link prediction~\cite{zhang2017weisfeiler, zhang2018link}, and graph classification~\cite{ying2018hierarchical, 10.5555/3305381.3305512}. 

Despite the success of GCNs, training a deep GCN on large-scale graphs is challenging.  
To compute the embedding of a node, GCN needs to recursively aggregate the embeddings of the neighboring nodes. 
The number of nodes needed for computing a single sample can grow exponentially with respect to the number of layers.  
This has made mini-batch sampling ineffective to achieve efficient training of GCNs. 
To alleviate the computational burden, various {\em neighbor sampling} methods have been proposed~\cite{hamilton2017inductive, ying2018graph, chen2018fastgcn, zou2019layer, AAAI1816642, chiang2019cluster, Zeng2020GraphSAINT}. 
The idea is that, instead of aggregating the embeddings of all neighbors, they compute an unbiased estimation of the result based on a sampled subset of neighbors. 

Although the existing neighbor sampling methods can effectively reduce the computation overhead of training GCNs, most of them assume a single-machine setting. 
The existing distributed GCN systems either perform neighbor sampling for each machine/GPU independently (e.g., PinSage~\cite{ying2018graph},  AliGraph~\cite{yang2019aligraph}, DGL~\cite{wang2019deep}) or perform a distributed neighbor sampling for all machines/GPUs (e.g., AGL~\cite{10.14778/3415478.3415539}). 
If the sampled neighbors on a machine include nodes stored on other machines, the system needs to transfer the feature vectors of the neighboring nodes across the machines. 
This incurs a huge communication overhead.  
None of the existing sampling methods or the distributed GCN systems have taken this communication overhead into consideration. 

In this work, we propose a {\em communication-efficient neighbor sampling} method for distributed training of GCNs. 
Our main idea is to assign higher sampling probabilities for local nodes so that remote nodes will be accessed less frequently. 
By discounting the embeddings with the sampling probability, we make sure that the estimation is unbiased. 
We present an algorithm to generate the sampling probability that ensures the convergence of training. 
To validate our sampling method, we conduct experiments with node classification benchmarks on different graphs. 
The experimental results show that our method significantly reduces the communication overhead with little accuracy loss.

\section{Related Work}
The idea of applying convolution operation to the graph domain is first proposed by~\cite{bruna2013spectral}. 
Later, \cite{kipf2017semi} and \cite{defferrard2016convolutional} simplify the convolution computation with localized filters. 
Most of the recent GCN models (e.g., GAT~\cite{velickovic2018graph},  GraphSAGE~\cite{hamilton2017inductive}, GIN~\cite{xu2018how}) are based on the GCN in \cite{kipf2017semi} where the information is  only from 1-hop neighbors in each layer of the neural network. 
In \cite{kipf2017semi}, the authors only apply their GCN to small graphs and use full batch for training. 
This has been the major limitation of the original GCN model as full batch training is  expensive and infeasible for large graphs. 
Mini-batch training does not help much since the number of nodes needed for computing a single sample can grow exponentially as the GCN goes deeper. 
To overcome this limitation, various neighbor sampling methods have been proposed to reduce the computation complexity of GCN training. 

\textbf{Node-wise Neighbor Sampling: }
GraphSAGE~\cite{hamilton2017inductive} proposes to reduce the receptive field size of each node by sampling a fixed number of its neighbors in the previous layer. 
PinSAGE~\cite{ying2018graph} adopts this node-wise sampling technique and enhances it by introducing an importance score to each neighbor. 
It leads to less information loss due to weighted aggregation.  VR-GCN~\cite{pmlr-v80-chen18p} further restricts the neighbor sampling size to two  and uses the historical activation of the previous layer to reduce variance. 
Although it achieves comparable convergence to GraphSAGE,VR-GCN incurs additional computation overhead for convolution operations on historical activation which can outweigh the benefit of reduced number of sampled neighbors.   
The problem with node-wise sampling is that, due to the recursive aggregation, it may still need to gather the information of a large number of nodes to compute the embeddings of a mini-batch. 

\textbf{Layer-wise Importance Sampling: }
To further reduce the sample complexity, FastGCN~\cite{chen2018fastgcn} proposes layer-wise importance sampling. 
Instead of fixing the number of sampled neighbors for each node, it fixes the number of sampled nodes in each layer. 
Since the sampling is conduced independently in each layer, it requires a large
sample size to guarantee the connectivity between layers. 
To improve the sample density and reduce the sample size, \cite{huang2018adaptive} and \cite{zou2019layer} propose to restrict the sampling space to the neighbors of nodes sampled in
the previous layer.

\textbf{Subgraph Sampling: }
Layer-wise sampling needs to maintain a list of neighbors and calculate a new sampling distribution for each layer. 
It incurs an overhead that can sometime deny the benefit of sampling, especially for small graphs. 
GraphSAINT~\cite{Zeng2020GraphSAINT} proposes to simplify the sampling procedure by sampling a subgraph and performing full convolution on the subgraph. 
Similarly, ClusterGCN~\cite{chiang2019cluster} pre-partitions a graph into small clusters and constructs mini-batches by randomly selecting subsets of clusters during the training. 

All of the existing neighbor sampling methods assume a single-machine setting. As we will show in the next section, a straightforward adoption of these methods to a distributed setting can lead to a large communication overhead.

\section{Background and Motivation}
\label{sec:background}

In a $M$-layer GCN, the $l$-th convolution layer is defined as $H^{(l)}=P\sigma(H^{(l-1)})W^{(l)}$ where $H^{(l)}$ represents the embeddings of all nodes at layer $l$ before activation, $H^{(0)}=X$ represents the feature vectors, $\sigma$ is the activation function, $P$ is the normalized Laplacian matrix of the graph, and $W^{(l)}$ is the learnable weights at layer $l$. 
The multiple convolution layers in the GCN can be represented as 
\begin{equation}
\label{eq:gcn}
   H^{(M)}=P\sigma(H^{(l-1)}(...\sigma(\underbrace{PXW^{(1)}}_{H^{(1)}})...))W^{(M)}. 
\end{equation}
The output embedding $H^{(M)}$ is given to some loss function $F$ for downstream learning tasks such as node classification or link prediction. 

\textbf{GCN as Multi-level Stochastic Compositional Optimization: }
As pointed out by \cite{cong2020minimal}, training a GCN with neighbor sampling  can be  considered as a multi-level {\em stochastic compositional optimization} (SCO) problem (although their description is not accurate). 
Here, we give a more precise connection between GCN training and multi-level SCO. Since the convergence property of algorithms for multi-level SCO has been extensively studied~\cite{yang2019multilevel, zhang2019multi, chen2020solving}, this connection will allow us to study the convergence of  GCN training with different neighbor sampling methods. 
We can define the graph convolution at layer $l\in [1,M]$ as a function $f^{(l)}=P\sigma(H^{(l-1)})W^{(l)}$.
The embedding approximation with neighbor sampling can be considered as a stochastic function $f_{\omega_l}^{(l)}=\tilde{P}^{(l)}\sigma(H^{(l-1)})W^{(l)}$ where $\tilde{P}^{(l)}$ is a stochastic matrix with $\mathbb{E}_{\omega_l}[\tilde{P}^{(l)}]=P$. 
Therefore, we have $f^{(l)}=\mathbb{E}_{\omega_l}[f_{\omega_l}^{(l)}]$. 
The loss function of the GCN can be written as 
\begin{equation}
    \mathcal{L}(\theta) = \mathbb{E}_{\omega_{(M+1)}}\left[ f_{\omega_{(M+1)}}^{(M+1)}\left( \mathbb{E}_{\omega_{M}}\left[f_{\omega_{M}}^{(M)}\left(...{E}_{\omega_1}[f^{(1)}(\theta)]...\right)\right] \right) \right]. 
\end{equation}
Here, $\theta$ is  the set of learnable weights at all layers $\{W^{(1)}, ..., W^{(M)}\}$, $f^{(M+1)}=F(H^{(M)})$, and the stochastic function $f_{\omega_{(M+1)}}^{(M+1)}$ corresponds to the mini-batch sampling.

\textbf{Distributed Training of GCN: }
As the real-world graphs are large and the compute/memory capacity of a single machine is limited, it is always desirable to perform distributed training of GCNs. 
A possible scenario would be that we train a GCN on a multi-GPU system. 
The global memory of a single GPU cannot accommodate the feature vectors of all nodes in the graph. 
It will be inefficient to store the feature vectors on the CPU main memory and move the feature vectors to GPU in each iteration of the training process because the data movement incurs a large overhead. 
We want to split the feature vectors and store them on multiple GPUs so that each GPU can perform calculation on its local data. 
Another possible scenario would be that we have a large graph with rich features which cannot be store on a single machine. 
For example, the e-commerce graphs considered in AliGraph~\cite{yang2019aligraph} can `contain tens of billions of nodes and hundreds of billions of edges with storage cost over 10TB
easily'. 
Such graphs need to be partitioned and stored on different machines in a distributed system. 
Figure~\ref{fig:example} shows an example of training a two-layer GCN on four GPUs. 
Suppose full neighbor convolution is used and each GPU computes the embeddings of its local nodes. 
GPU-0 needs to compute the embeddings of node A and B and obtain a stochastic gradient $\tilde{g_0}$ based on the loss function. 
GPU-1 needs to compute the embeddings of node C and D and obtain a stochastic gradient $\tilde{g_1}$. 
Similarly, GPU-2 and GPU-3 compute the embeddings of their local nodes and obtain stochastic gradient $\tilde{g_2}$ and $\tilde{g_3}$. 
The stochastic gradients obtained on different GPUs are then averaged and used to update the model parameters. 

\begin{figure}
    \centering
    \includegraphics[scale=0.34]{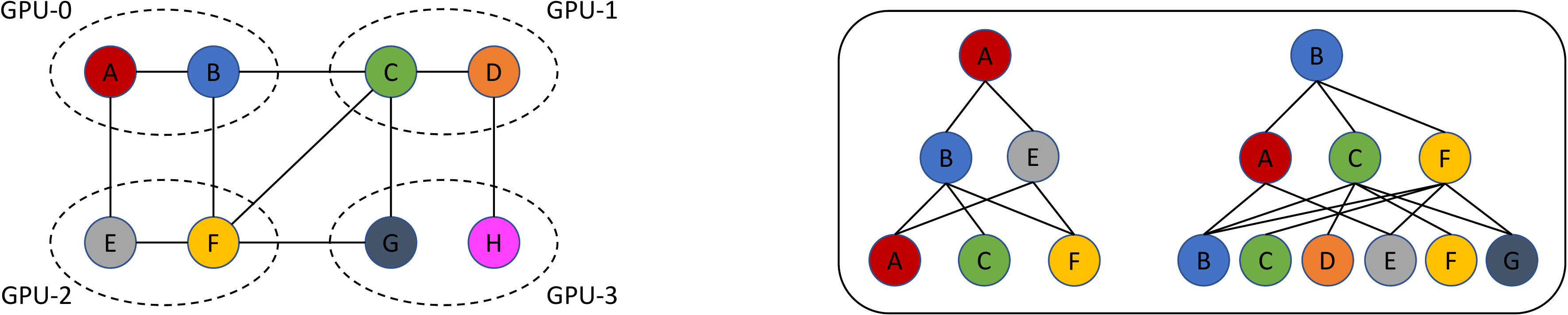}
    \caption{An example of distributed GCN training. Left: A graph with 8 nodes are divided into four parts and stored on four GPUs. Right: For a two-layer GCN, to compute the embedding of node A, we need the feature vectors of node A, B, C, E and F; to compute the embedding of node B, we need the feature vectors of node B, C, D, E, F, G. Nodes that are not on the same GPU need to be transferred through the GPU connections. }
    \label{fig:example}
\end{figure}

\textbf{Communication of Feature Vectors: }
As shown in Figure~\ref{fig:example}, the computation of a node's embedding may involve reading the feature vector of a remote node. 
To compute the embedding of node A on GPU-0, we need the intermediate embeddings of node B and E, which in turn need the feature vectors of node A, B, C, E and F (Note that the feature vector of node E itself is needed to compute its intermediate embedding; the same for node B).  
Since node C, E, F are not on GPU-0, we need to send the feature vectors of node C from GPU-1 and node E, F from GPU-2. 
Similarly, to compute the embedding of node B on GPU-0, we need feature vectors of node B, C, D, E, F and G, which means that GPU-0 needs to obtain data from all of the other three GPUs. 
This apparently incurs a large communication overhead. 
Even with neighbor sampling, the communication of the feature vectors among the GPUs are unavoidable. 
In fact, in our experiments on a four-GPU workstation, the communication can take more than 60\% of the total execution time with a naive adoption of neighbor sampling. 
The problem is expected to be more severe on distributed systems with multiple machines.  
Therefore, reducing the communication overhead for feature vectors is critical to the performance of distributed training of GCNs.

\section{Communication-Efficient Neighbor Sampling}
To reduce the communication overhead of feature vectors, a straightforward idea is to skew the probability distribution for neighbor sampling so that local nodes are more likely to be sampled. 
More specifically, to estimate the aggregated embedding of node $i$'s neighbors (i.e., $\sum_{j\in N(i)}w_{ij}x_j$ where $N(i)$ denotes the neighbors of node $i$,  $x_j$ is the embedding of node $j$, and $w_{ij}$ is its weight), 
we can define a sequence of random variables $\xi_j\sim$ Bernoulli($p_j$) where $p_j$ is the probability that node $j$ in the neighbor list is sampled. 
We have an unbiased estimate of the result as
\begin{equation}
\label{eq:est}
    \sum_{j\in N(i)}\frac{1}{p_j}\xi_j w_{ij}x_j. 
\end{equation}
The expected communication overhead with this sampling strategy is 
\begin{equation}
   comm\_overhead \propto \mathbb{E}\left[ \sum_{j\in R}\xi_j\right] = \sum_{j\in R}p_j
\end{equation}
where $R$ is the set of remote nodes.  
Suppose we have a sampling budget $B$ and we denote all the local nodes as $L$. We can let $\sum_{j\in N(i)}p_j=\sum_{j\in L}p_j+\sum_{j\in R}p_j=B$ so that $B$ neighbors are sampled on average. 
It is apparent that, if we increase the sampling probability of local nodes (i.e., $\sum_{j\in L}p_j$), the expected communication overhead will be reduced. 
However, the local sampling probability cannot be increased arbitrarily. As an extreme case, if we let $\sum_{j\in L}p_j=B$, only the local nodes will be sampled, but we will not be able to obtain an unbiased estimate of the result, which can lead to poor convergence of the training algorithm. 
We need a sampling strategy that can reduce the communication overhead while maintaining an unbiased approximation with small variance. 

\subsection{Variance of Embedding Approximation}
Consider the neighbor sampling at layer $l+1$. Suppose $S_l$ is the set of sampled nodes at layer $l$. 
We sample from all the neighbors of nodes in $S_l$ and estimate the result for each of the node using (\ref{eq:est}). 
The total estimation variance is 
\begin{equation}
    V = \mathbb{E}\left[\sum_{i\in S_l}\norm{\sum_{j\in N(S_l)}\frac{1}{p_j}\xi_jw_{ij}x_j - \sum_{j\in N(S_l)}w_{ij}x_j}^2\right] = \sum_{j\in N(S_l)}\left( \frac{1}{p_j}-1 \right)\norm{w_{*j}}^2\norm{x_j}^2.
\end{equation}
Here $\norm{w_{*j}}^2=\sum_{i\in S_l} w_{ij}^2$ is the sum of squared weights of edges from nodes in $S_l$ to node $j$. 
Clearly, the smallest variance is achieved when $p_j=1,\forall j$, and it corresponds to full computation. 
Since we are given a sampling budget, we want to minimize $V$ under the constraint  $\sum_{j\in N(S_l)}p_j\leq B$. 
The optimization problem is infeasible because the real value of $\norm{x_j}^2$ is unknown during the sampling phase.  
Some prior work uses $\norm{x_j}^2$ from the previous iteration of the training loop to obtain an optimal sampling probability distribution (e.g., \cite{cong2020minimal}). 
This however incurs an extra overhead for storing $x_j$ for all the layers. 
A more commonly used approach is to consider $\norm{x_j}^2$ as bounded by a constant $C$ and minimize the upper bound of $V$~\cite{chen2018fastgcn, zou2019layer}. 
The problem can be written as a constrained optimization problem:
\begin{flalign}
\label{eq:var}
    \min\;\;\;\;\;\;\;\; &\sum_{j\in N(S_l)}\left( \frac{1}{p_j}-1 \right)\norm{w_{*j}}^2C  \\ \nonumber
\text{subject to } &\sum_{j\in N(S_l)}p_j\leq B. \\ \nonumber
& 0 < p_j \leq 1. 
\end{flalign}

\textbf{The Sampling Method Used in Previous Works: }
Although the solution to the above problem can be obtained, it requires expensive computations. 
For example, \cite{cong2020minimal} give an algorithm that  needs to sort  $\norm{w_{*j}}^2$ and searches for the solution.  
As neighbor sampling is performed at each layer of the GCN and in each iteration of the training algorithm, finding the exact solution to (\ref{eq:var}) may significantly slowdown the training procedure. 
\cite{chen2018fastgcn} and \cite{zou2019layer} adopt a simpler sampling method. 
They define a discrete probability distribution over all nodes in $N(S_l)$ and assign the probability of returning node $j$ as 
\begin{equation}
\label{eq:qj}
    q_j=\frac{\norm{w_{*j}}^2}{\sum_{k\in N(S_l)}\norm{w_{*k}}^2}. 
\end{equation}
They run the sampling for $B$ times (without replacement) to obtain $B$ neighbors. 
We call this sampling method {\em linear weighted sampling}. 
Intuitively, if a node is in the neighbor list of many nodes in $S_j$ (i.e., $\norm{w_{*j}}$ is large), it has a high probability of being sampled. 
More precisely, the probability of node $j$ being sampled is 
\begin{equation}
\label{eq:smp1}
    p_j=1-(1-q_j)^B \leq q_jB.
\end{equation}
Plugging (\ref{eq:qj}) into (\ref{eq:smp1}) and (\ref{eq:var}), we can obtain an upper bound of the variance of embedding approximation with this linear weighted sampling method as  
\begin{equation}
\label{eq:v_lnr}
    V_{lnr}= \left( \frac{|N(S_l)|}{B} - 1\right) \sum_{k\in N(S_l)}\norm{w_{*k}}^2  C
\end{equation}
Due to its easy calculation, we adopt this sampling strategy in our work, and we skew the sampling probability distribution to the local nodes so that the communication overhead can be reduced.  

\subsection{Skewed Linear Weighted Sampling}
Our idea is to scale the sampling weights of local nodes by a factor $s>1$. More specifically, we divide the nodes in $N(S_l)$ into the local nodes $L$ and the remote nodes $R$, and we define the sampling probability distribution as
\begin{equation}
\label{eq:smp2}
      q_{j} =
    \begin{cases}
      \frac{s\norm{w_{*j}}^2}{\sum_{k\in L}s\norm{w_{*k}}^2  + \sum_{k\in R}\norm{w_{*k}}^2} & \text{if $j\in L$}\\
      \frac{\norm{w_{*j}}^2}{\sum_{k\in L}s\norm{w_{*k}}^2  + \sum_{k\in R}\norm{w_{*k}}^2} & \text{if $j\in R$}.
    \end{cases}    
\end{equation}
Compared with (\ref{eq:qj}), (\ref{eq:smp2}) achieves a smaller communication overhead because $\sum_{j\in R}p_j$ is smaller. 
We call our sampling method {\em skewed linear weighted sampling}. 
Clearly, the larger $s$ we use, the more communication we save. 
Our next task is to find $s$ that can ensure the convergence of the training.

We start by studying the approximation variance with our sampling method. 
Plugging (\ref{eq:smp2}) into (\ref{eq:smp1}) and (\ref{eq:var}), we can obtain an upper bound of the variance as
\begin{flalign}
\label{eq:v_skewed}
    V_{skewed}&=\left(\left( \frac{|L|}{sB} + \frac{|R|}{B}\right)\left(\sum_{k\in L}s\norm{w_{*k}}^2  + \sum_{k\in R}\norm{w_{*k}}^2\right) - \sum_{k\in N(S_l)}\norm{w_{*k}}^2\right)  C \\ \nonumber
    &= V_{lnr} + \frac{(s-1)|R|}{B}\sum_{k\in L}\norm{w_{*k}}^2C + \frac{(1-s)|L|}{sB}\sum_{k\in R}\norm{w_{*k}}^2C. 
\end{flalign}
Note that the variance does not necessarily increase with the skewed neighbor sampling (the last term of (\ref{eq:v_skewed}) is negative).  
%This is because the linear weighted sampling in (\ref{eq:qj}) may not be the optimal solution to (\ref{eq:var}). 

%Next, we draw connections between GCN sampling and {\em stochastic compositional optimization} (SCO) to show how the convergence of GCN training can be affected by the skewed sampling.  

Since GCN training is equivalent to multi-level SCO as explained in the Background section, we can use the convergence analysis of multi-level SCO to study the convergence of GCN training with our skewed neighbor sampling. 
Although different algorithms for multi-level SCO achieve different convergence rates~\cite{yang2019multilevel, zhang2019multi, chen2020solving}, for general non-convex objective function $\mathcal{L}$, all of these algorithms have the optimality error $\left( \mathcal{L}(x_{k+1})-\mathcal{L}^*\right)$ or $\nabla \mathcal{L}(x_{k+1})$ bounded by some terms that are linear to the upper bound of the approximation variance at each level. 
This means that if we can make sure $V_{skewed} = \Theta(V_{lnr})$, our skewed neighbor sampling will not affect the convergence of the training. 
In light of this, we have the following theorem for determining the scaler $s$ in (\ref{eq:smp2}). 

\begin{theorem}
When the number of remote nodes $|R|> 0$, with some small constant $D$, if we set 
\begin{equation}
\label{eq:fomula_s}
    s = \left( \frac{D(|N(S_l)|-B)}{|R|}+\frac{1}{2}\right),
\end{equation}
the training algorithm using our  sampling probability in (\ref{eq:smp2}) will achieve the same convergence rate as using  the linear weighted sampling probability in (\ref{eq:qj}).
\end{theorem}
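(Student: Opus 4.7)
The plan is to reduce the convergence statement to a purely variance-based inequality, then verify that inequality by algebraic manipulation of the expression for $V_{skewed}$ already derived in equation~(\ref{eq:v_skewed}).

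First I would invoke the discussion in the background section: since GCN training with neighbor sampling is an instance of multi-level stochastic compositional optimization, convergence results from~\cite{yang2019multilevel, zhang2019multi, chen2020solving} apply, and each of those results bounds the optimality measure (either $\mathcal{L}(x_{k+1}) - \mathcal{L}^*$ in the convex case or $\|\nabla\mathcal{L}(x_{k+1})\|$ in the non-convex case) by an expression that depends linearly on the per-level variance upper bound. Thus it suffices to prove that whenever $s$ is chosen according to (\ref{eq:fomula_s}) we have $V_{skewed} \leq (1 + cD)\, V_{lnr}$ for some universal constant $c$, which in $\Theta(\cdot)$ terms means $V_{skewed} = \Theta(V_{lnr})$ and therefore the skewed scheme enters the convergence bound with the same asymptotic rate as the linear weighted scheme.

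Next I would work with the decomposition given in (\ref{eq:v_skewed}):
\begin{equation*}
V_{skewed} = V_{lnr} + \tfrac{(s-1)|R|}{B}\sum_{k\in L}\|w_{*k}\|^2 C - \tfrac{(s-1)|L|}{sB}\sum_{k\in R}\|w_{*k}\|^2 C.
\end{equation*}
For $s>0$ the third term is non-positive (it equals zero only at $s=1$), so a sufficient condition for $V_{skewed}\leq (1+D')V_{lnr}$ is
\begin{equation*}
\tfrac{(s-1)|R|}{B}\sum_{k\in L}\|w_{*k}\|^2 \;\leq\; D'\,\tfrac{|N(S_l)|-B}{B}\sum_{k\in N(S_l)}\|w_{*k}\|^2.
\end{equation*}
Using the elementary bound $\sum_{k\in L}\|w_{*k}\|^2 \leq \sum_{k\in N(S_l)}\|w_{*k}\|^2$ reduces this to the clean inequality $(s-1)|R|\leq D'(|N(S_l)|-B)$, i.e.\ $s \leq 1 + D'(|N(S_l)|-B)/|R|$. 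Substituting the choice from (\ref{eq:fomula_s}) gives $s - \tfrac{1}{2} = D(|N(S_l)|-B)/|R|$, and the additive $\tfrac{1}{2}$ absorbed into a slightly larger constant (say $D' = 2D$) makes the required inequality hold whenever $D$ is taken small enough. The $\tfrac{1}{2}$ offset itself is harmless for the $\Theta$-rate claim: it only shifts $s$ by a bounded constant, and moreover guarantees $s$ stays strictly away from $1$ when $|N(S_l)|-B$ is small, which is the regime in which the gains from skewing vanish anyway.

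The main obstacle will not be the algebra above but the step that invokes the multi-level SCO bounds: those convergence theorems require additional regularity assumptions (Lipschitz continuity of each $f^{(l)}$, bounded gradients, bounded composition levels, and appropriate step-size schedules) and the ``linear dependence on the variance upper bound'' is phrased slightly differently across~\cite{yang2019multilevel,zhang2019multi,chen2020solving}. A careful proof therefore has to (i) state a concrete set of standing assumptions, (ii) verify them for the GCN composition $f^{(M+1)}\circ\cdots\circ f^{(1)}$, and (iii) pick one particular reference rate to cite, showing that replacing the per-level variance bound $V_{lnr}$ by $V_{skewed}\leq(1+cD)V_{lnr}$ leaves that rate unchanged up to a constant. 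Once this abstraction step is executed, the variance comparison above closes the argument and the explicit form of $s$ in (\ref{eq:fomula_s}) drops out immediately.
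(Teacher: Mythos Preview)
Your high-level strategy matches the paper's: reduce the convergence claim to the variance inequality $V_{skewed}\le D_1 V_{lnr}$ via the multi-level SCO connection, then do algebra on~(\ref{eq:v_skewed}) to extract an admissible $s$. Where you diverge is in the algebra. The paper keeps all three terms of~(\ref{eq:v_skewed}), writes $V_{skewed}\le D_1 V_{lnr}$ as a quadratic inequality in $s$, solves it exactly to get the threshold $\tfrac{T_1+T_2}{2}+\tfrac{1}{2}\sqrt{(T_1+T_2)^2-4T_3}$ (with $T_2,T_3$ depending on the weight ratios $\sum_{R}\|w_{*k}\|^2/\sum_{L}\|w_{*k}\|^2$), and only \emph{then} discards the weight-dependent pieces to obtain the feasible choice $s=T_1/2$, which is exactly~(\ref{eq:fomula_s}) with $D=(D_1-1)/2$. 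You instead drop the non-positive third term of~(\ref{eq:v_skewed}) at the outset and bound $\sum_{L}\|w_{*k}\|^2\le\sum_{N(S_l)}\|w_{*k}\|^2$, which linearizes the problem immediately and lands on the same condition without ever touching a quadratic. Your route is shorter and more elementary; the paper's route has the advantage of producing the sharp threshold before simplification, so one can see precisely what is being thrown away.

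Two small points to clean up. First, your sentence ``for $s>0$ the third term is non-positive'' is false as written: the term $-\tfrac{(s-1)|L|}{sB}\sum_{R}\|w_{*k}\|^2C$ is non-positive only for $s\ge 1$, which is the regime you actually care about. Second, your remark that the $+\tfrac12$ ``guarantees $s$ stays strictly away from $1$'' is backward (it pins $s$ near $\tfrac12$, not away from $1$, when $|N(S_l)|-B$ is small); this commentary is inessential to the argument and can simply be dropped. Neither issue affects the validity of the main bound, and your observation that the SCO invocation needs its regularity hypotheses stated is a fair critique that applies equally to the paper's own proof.
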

\begin{proof}
If we set $V_{skewed}\leq D_1 V_{lnr}$ with some constant $D_1$, we can calculate an exact upper bound of $s$ by solving the equations with (\ref{eq:qj}) and (\ref{eq:smp2}). 
The upper bound is $\frac{T_1+T_2}{2}+\frac{\sqrt{(T_1+T_2)^2-4T_3}}{2}$ where $T_1=\frac{(D_1-1)(|L|+|R|-B)}{|R|}+1$, $T_2=\frac{((D_1-1)(|L|+|R|-B)+|L|)\sum_{k\in R}\norm{w_{*k}}^2}{|R|\sum_{k\in L}\norm{w_{*k}}^2}$, $T_3=\frac{|L|\sum_{k\in R}\norm{w_{*k}}^2}{|R|\sum_{k\in L}\norm{w_{*k}}^2}$. 
$|L|$ is the number of local nodes, $|R|$ is the number of remote nodes, and $B$ is the sampling budget. 
For simple computation, we ignore all the terms dependant on $\sum_{k\in L}\norm{w_{*k}}^2$ and $\sum_{k\in R}\norm{w_{*k}}^2$, and it gives us a feasible solution $s=\frac{T_1}{2}=\left( \frac{D(|N(S_l)|-B)}{|R|}+\frac{1}{2}\right)$ where $D=\frac{D_1-1}{2}$. 
\end{proof}
Intuitively, if there are few remote nodes (i.e., $\frac{|N(S_l)|}{|R|}$ is large), we can sample the local nodes more frequently, and (\ref{eq:fomula_s}) gives us a larger $s$.  
If we have a large sampling budget $B$, the estimation variance of the linear weighted sampling (\ref{eq:v_lnr}) is small.
We will need to sample enough remote nodes to keep the variance small, and (\ref{eq:fomula_s}) gives us a smaller $s$. 

%Given an upper bound of the approximation variance $V_T$ for all levels, algorithms with guaranteed convergence rate are available for SCO with general non-convex objective functions~\cite{yang2019multilevel, zhang2019multi, chen2020solving}. 
%Since the convergence rates of these algorithms are linear to the upper bound $V_T$~\cite{yang2019multilevel}, they can converge at the same (asymptotic) rate if we scale the upper bound by a small constant.   
%This means that, \textit{if we make sure $V_{skewed} \leq V_{lnr}D$ with a small constant $D$, our skewed linear weighted sampling will not affect much of the algorithm's convergence}. 
%In light of this, we can obtain $s$ from (\ref{eq:v_lnr}) and (\ref{eq:v_skewed}) as

%The authors show that the stochastic gradient obtained by neighbor sampling is biased, and the noise of the stochastic gradient can be decomposed into a bias and a variance term. That is, 
%\begin{equation}
%    \mathbb{E}\norm{\tilde{g}-\nabla f(\theta)}^2= %\underbrace{\mathbb{E}\norm{\tilde{g}-g}^2}_{\text{bias (V)}} + %\underbrace{\mathbb{E}\norm{g-\nabla f(\theta)}^2}_{\text{variance (G)}}. 
%\end{equation}
%Here $\tilde{g}$ is the stochastic gradient obtained by mini-batch sampling and neighbor sampling, $\nabla f(\theta)$ is the full gradient, and $g$ is an unbiased estimate of $\nabla f(\theta)$. 
%The authors also show that the bias term is bounded by a weighted sum of the estimation variance with neighbor sampling. That is,
%\begin{equation}
%    \mathbb{E}\norm{\tilde{g}-g}^2=O(\sum_{l=1}^{})
%\end{equation}

\section{Experimental Results}
We evaluate our communication-efficient sampling method in this section. 
 
\subsection{Experimental Setup}

\textbf{Platform: }
We conducted our experiments on two platforms: a workstation with four Nvidia RTX 2080 Ti GPUs, and eight machines each with an Nvidia P100 GPU in a HPC cluster. 
The four GPUs in the workstation are connected through PCIe 3.0 x16 slot.  
The nodes in the HPC cluster are
connected with 100Gbps InfiniBand based on a fat-tree topology. 
Our code is implemented with PyTorch 1.6. 
We use CUDA-aware MPI for communication among the GPUs. 
To enable the send/recv primitive in PyTorch distributed library, we compile PyTorch from source with OpenMPI 4.0.5.   

\begin{wrapfigure}{r}{0.38\textwidth}
\centering
%\vspace{-1em}
\captionof{table}{Graph datasets.}
\label{tab:graph_details}
%\vspace{-1em}
\small
\begin{tabular}{c|c|c}
 {\bf Graph} & {\bf \#Nodes} & {\bf \#Edges} \\
  \hline
  \hline
  %Cora & 2708 & 10556\\
  Cora & 2.7K & 10.5K\\
  \hline
  %CiteSeer & 3327 & 9228\\
  CiteSeer & 3.3K & 9.2K\\
  \hline
 Reddit & 233K & 58M\\
 \hline
 YouTube & 1.1M & 6.1M \\
 \hline
   Amazon & 1.6M & 132M \\
 \hline
\end{tabular}
%\vspace{-1em}
\end{wrapfigure}

\textbf{Datasets: }
We conduct our experiments on five graphs as listed in Table~\ref{tab:graph_details}.
Cora and CiteSeer are two small graphs that are widely used in previous works for evaluating GCN performance~\cite{zou2019layer, chen2018fastgcn, Zeng2020GraphSAINT}. 
Reddit is a medium-size graph with 233K nodes and an average degree of 492. 
Amazon is a large graph with more than 1.6M nodes and 132M edges~\cite{Zeng2020GraphSAINT}. 
We use the same configurations of training set, validation set, and test set for the graphs as in previous works~\cite{zou2019layer, chen2018fastgcn, Zeng2020GraphSAINT}.
Youtube is a large graph with more than 1M nodes~\cite{mislove-2007-socialnetworks}. 
Each node represents a user, and the edges represent the links among users. 
%The links in YouTube are relatively sparse compared with Reddit. 
The graph does not have feature vector or label information given. 
We generate the labels and feature vectors based on the group information of the nodes. 
More specifically, we choose the 64 largest groups in the graph as labels. The label of each node is a vector of length 64 with element value of 0 or 1 depending on whether the node belongs to the group. 
Only the nodes that belong to at least one group are labeled. 
For feature vector, we randomly select 2048 from the 4096 largest groups.  
%If a node belongs to a group, the corresponding element in the feature vector of that node is 1; otherwise, the element is 0. 
If a node does not belong to any group, its feature vector is 0. 
%When selecting the 2048 groups, we make sure that only half of the 64 largest groups are encoded in the feature vector. 
%Although there are more advanced encoding method that can lead to better accuracy (e.g., ~\cite{li2020distance}), this simple feature encoding is enough to show the advantage of our communication-efficient neighbor sampling. 
We use 90\% of the labeled nodes for training and the remaining 10\% for testing. 
%Each node represents a user, and the edges represent the links among users. 
%The links in YouTube are relatively sparse compared with Reddit. 
%The graph does not have feature vector or label information given. 
%We generate the labels and feature vectors based on the group information of the nodes. 
%More specifically, we choose the 64 largest groups in the graph as labels. The label of each node is a vector of length 64 with element value of 0 or 1 depending on whether the node belongs to the group. 
%Only the nodes that belong to at least one group are labeled. 
%For feature vector, we randomly select 2048 from the 4096 largest groups.  
%If a node belongs to a group, the corresponding element in the feature vector of that node is 1; otherwise, the element is 0. 
%If a node does not belong to any group, its feature vector is 0. 
%When selecting the 2048 groups, we make sure that only half of the 64 largest groups are encoded in the feature vector. 
%Although there are more advanced encoding method that can lead to better accuracy (e.g., ~\cite{li2020distance}), this simple feature encoding is enough to show the advantage of our communication-efficient neighbor sampling. 
%We use 90\% of the labeled nodes for training and the remaining 10\% for testing. 

\textbf{Benchmark and Settings: }
We use a 5-layer standard GCN (as in Formula (\ref{eq:gcn})) to perform node classification tasks on the graphs. 
For Cora, CiteSeer, Reddit and Amazon whose labels are single value, we use the conventional cross-entropy loss to perform multi-class classification. 
For YouTube, since the nodes' labels are vectors, we use binary cross entropy loss with a rescaling weight of 50 to perform multi-label classification. 
The dimension of the hidden state is set to 256 for Cora, CiteSeer and Reddit.  
The dimension of the hidden state is set to 512 for YouTube and Amazon.
For distributed training, we divide the nodes evenly among different GPUs. 
Each GPU runs sampling-based training independently, with the gradients averaged among GPUs in each iteration.

We incorporate our skewed sampling technique into LADIES~\cite{zou2019layer} and GraphSAINT~\cite{Zeng2020GraphSAINT}, both of which use linear weighted sampling as in (\ref{eq:qj}). 
The only difference is that LADIES uses all neighbors of nodes at layer $l$ as $N(S_l)$ when it samples nodes at layer $l+1$, while GraphSAINT considers all training nodes as $N(S_l)$ when it samples the subgraph.

We compare the performance of three versions. 
The first version ({\tt Full}) uses the original linear weighted sampling and  transfers sampled neighbors among GPUs. 
The second version ({\tt Local}) also uses linear weighted sampling but only aggregates neighboring nodes on the same GPU. 
The third version ({\tt Our}) uses our skewed sampling and transfers sampled neighbors among GPUs. 

\subsection{Results on Single-Machine with Multiple GPUs}

We use LADIES to train the GCN with Cora, CiteSeer, Reddit and YouTube graph on the four-GPU workstation. 
The batch size on each GPU is set to 512, and the number of neighbor samples in each intermediate layer is also set to 512.

\begin{table}[t]
\centering
\caption{Comparison of our sampling method with a naive adoption of the LADIES sampler on Cora and CiteSeer. }
%\vspace{-1em}
\small
\begin{tabular}{c|c|c|c}
    {\bf Graph}     &                                                                                             {\bf Sampling Method}    & {\bf F1-Score (\%)} & {\bf Communication Data Size (\#nodes)} \\
         \hline
         \hline
Cora     & \begin{tabular}[c]{@{}c@{}} {\tt Full}\\ {\tt Our} (D=4)\\ {\tt Our} (D=8)\\ {\tt Our} (D=16)\\ {\tt Our} (D=32)\end{tabular} &  \begin{tabular}[c]{@{}c@{}} $74.46 \pm 1.36$\\ $74.82 \pm 0.97$\\  $75.84 \pm 1.05$\\ $75.80 \pm 1.69$\\ $74.96 \pm 1.15$\end{tabular}              &     \begin{tabular}[c]{@{}c@{}}  $402582291 \pm 410933$\\ $316248197 \pm 165161$\\ $299891935 \pm 267364$\\ $284031348 \pm 295241$\\ $270444788 \pm 290565$\end{tabular}                              \\
\hline
CiteSeer & \begin{tabular}[c]{@{}c@{}}{\tt Full}\\ {\tt Our} (D=4)\\ {\tt Our} (D=8)\\ {\tt Our} (D=16)\\ {\tt Our} (D=32)\end{tabular} &  \begin{tabular}[c]{@{}c@{}}$66.54 \pm 1.80$\\ $65.58 \pm 2.52$\\  $65.50 \pm 2.43$\\ $65.36 \pm 2.48$\\ $65.64 \pm 2.51$\end{tabular}             &       \begin{tabular}[c]{@{}c@{}}$900858804 \pm 595144$\\ $722616380 \pm 588753$\\ $704497231 \pm 518062$\\ $689739665 \pm 438101$\\ $679202038 \pm 413854$\end{tabular}                            \\
\hline
\end{tabular}

\label{tab:cora_results}
\end{table}

\textbf{Cora and CiteSeer Results: } 
Although these two graphs are small and can be easily trained on a single GPU, 
we apply distributed training to these two graphs and measure the total communication data size to show the benefit of our sampling method. 
Table~\ref{tab:cora_results} shows the best test accuracy and the total communication data size in 10 epochs of training. 
The results are collected in 10 different runs and we report the mean and deviation of the numbers. 
Compared with the full-communication version, our sampling method does not cause any accuracy loss for Cora with $D$ (in Formula  (\ref{eq:fomula_s})) set to $4,8,16,32$. 
For CiteSeer, the mean accuracy in different runs decreases by about 1\% with our sampling method. However, the best accuracy in different runs matches  the best accuracy of full-communication version. 
Figure~\ref{fig:cora_loss} shows the training loss over epochs with different sampling methods on Cora.  
We can see that local aggregation leads to poor convergence, due to the loss of information in the edges across GPUs. 
The other versions have the model converge to optimal after 3 epochs. 
The training loss on CiteSeer follows a similar pattern. 
The results indicate that our sampling method does not impair the convergence of training.
\begin{wrapfigure}{l}{0.36\textwidth}
\centering
%\vspace{-1em}
\centering
  \includegraphics[scale=.24]{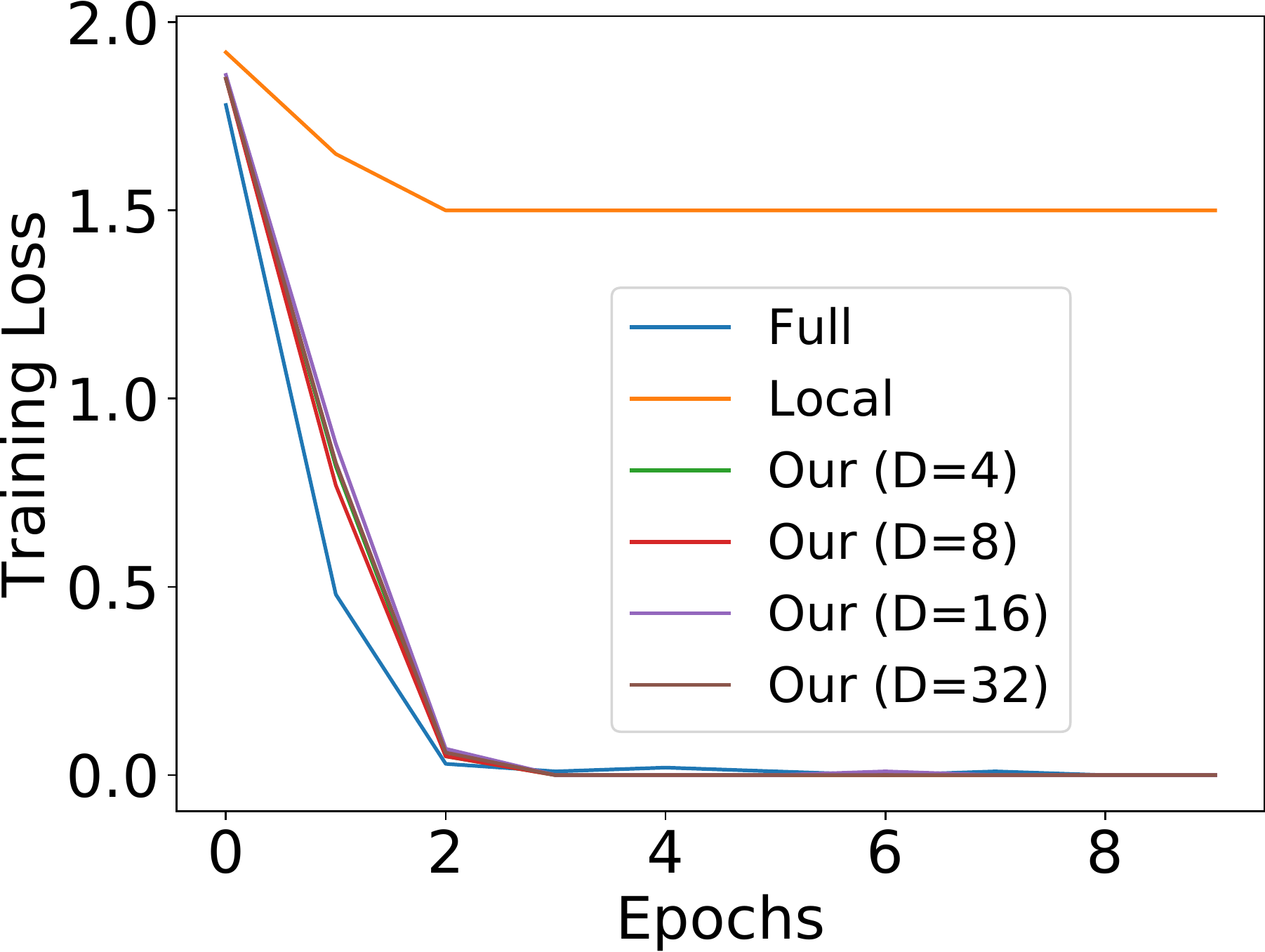} 
 % \vspace{-1em}
\caption{Training loss over epochs on Cora. }
%\vspace{-1.2em}
\label{fig:cora_loss}
\end{wrapfigure}

The execution times of different versions are almost the same on these two small graphs because the communication overhead is small and reducing the communication has little effect to the overall performance. Therefore, instead of reporting the execution time, we show the communication data size of different versions. 
The numbers in Table~\ref{tab:cora_results} are the numbers of nodes whose feature vectors are transferred among GPUs during the entire training process. We can see that the communication is indeed reduced. When $D=32$, our sampling method saves the communication overhead by 1.48x on Cora and 1.32x on CiteSeer.

%\subsection{Results on Reddit and YouTube}

%results of reddit
\begin{figure}[t]
\centering
\subfloat[Training Loss]{
  \includegraphics[scale=.24]{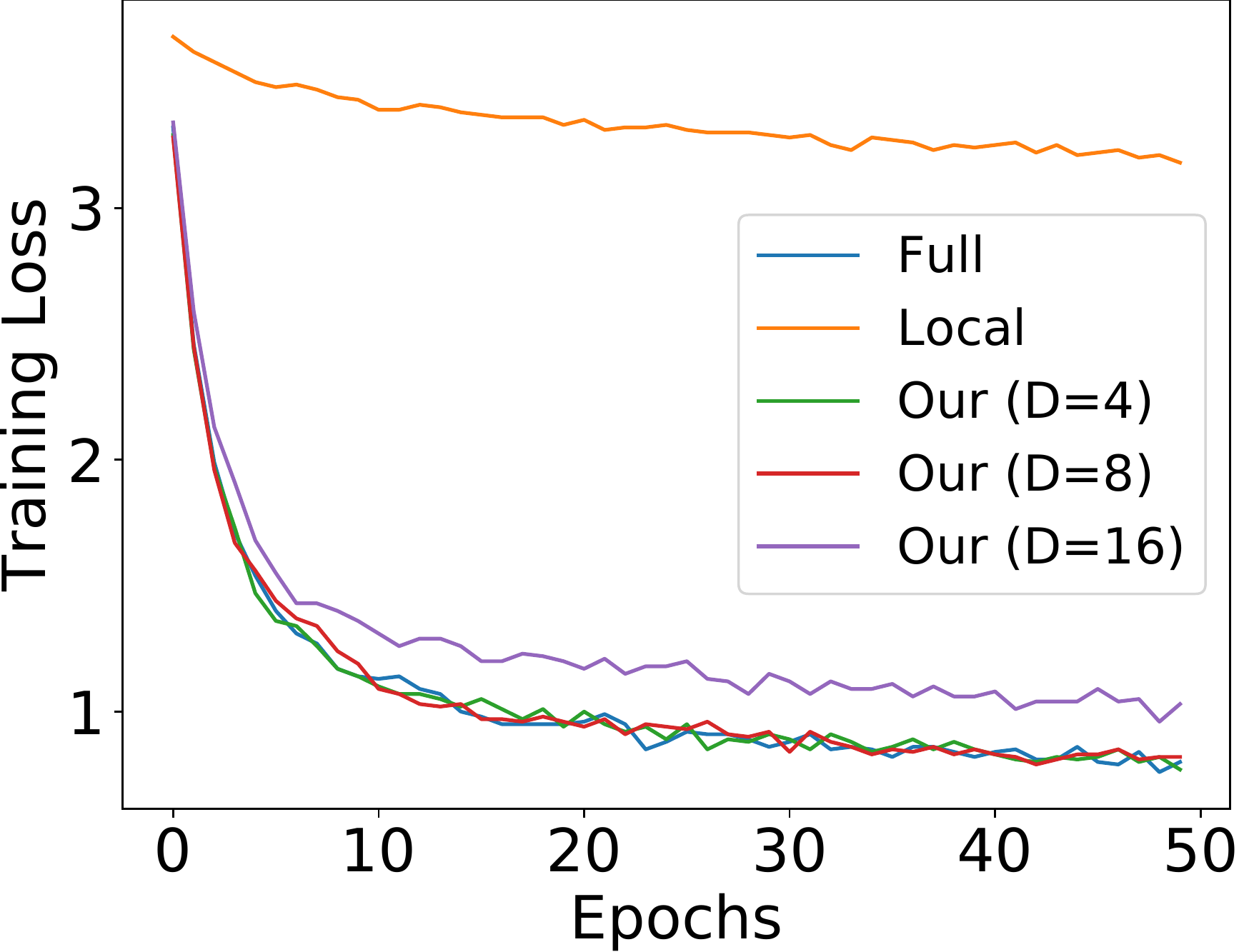}\label{fig:reddit_loss}}\hfill
 \subfloat[Validation Accuracy]{
  \includegraphics[scale=.24]{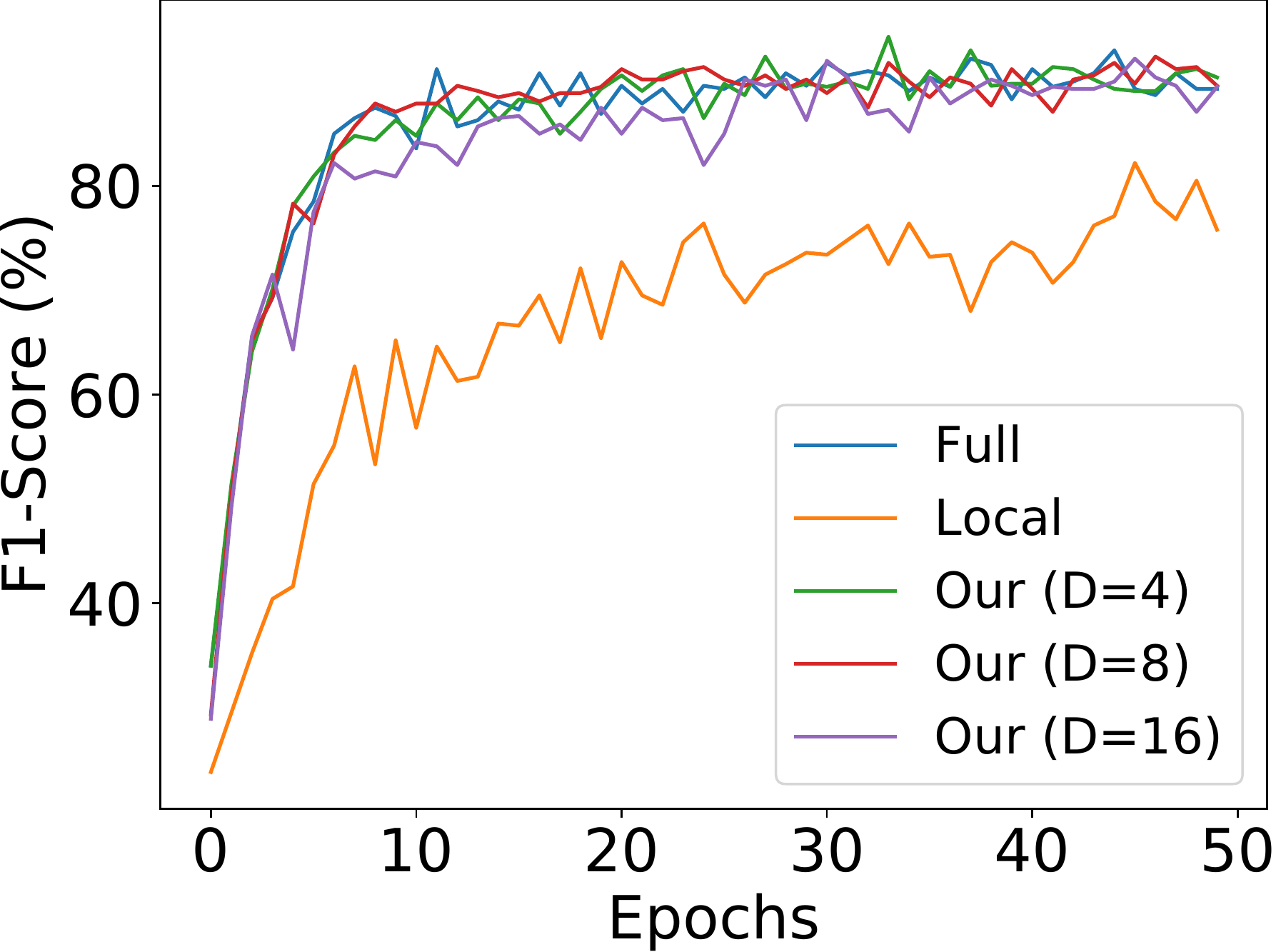}\label{fig:reddit_val}}\hfill
  \subfloat[Execution Time]{
  \includegraphics[scale=.24]{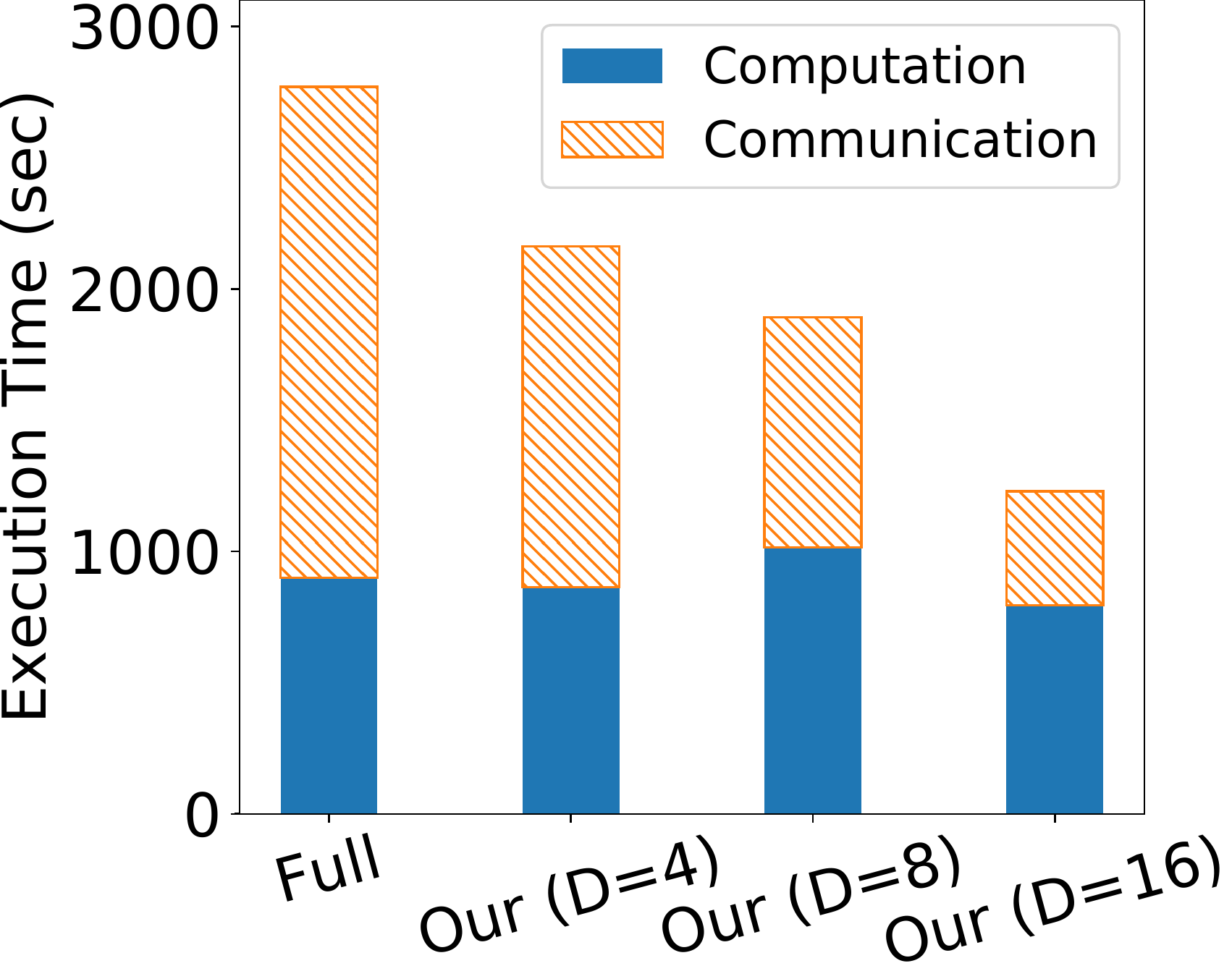}\label{fig:reddit_time}}
 % \vspace{-.8em}
\caption{Results on Reddit graph.}
%\vspace{-1.5em}
\label{fig:reddit_results}
\end{figure}

%results of youtube
\begin{figure}[t]
\centering
\subfloat[Training Loss]{
  \includegraphics[scale=.24]{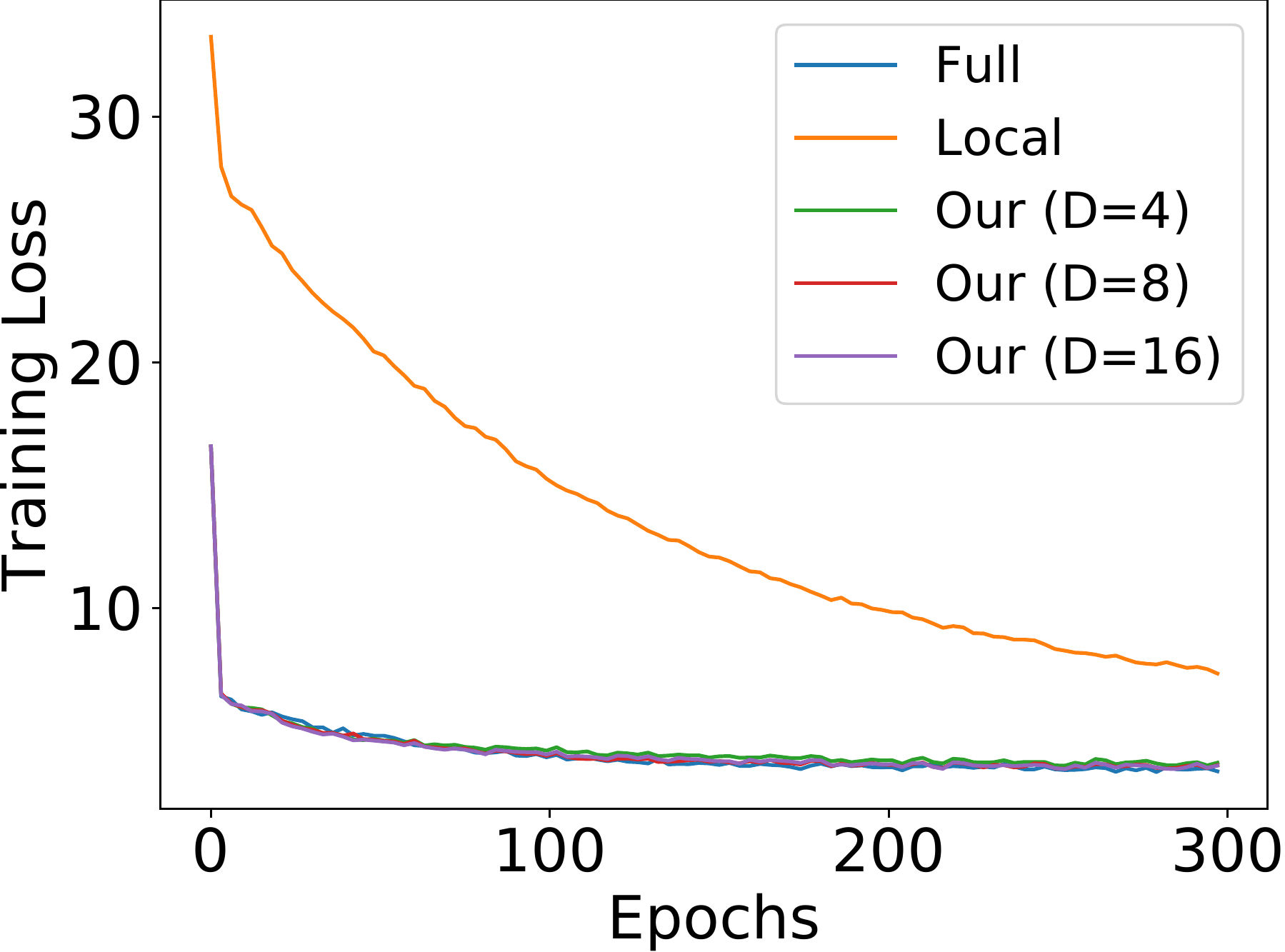}\label{fig:youtube_loss}}\hfill
 \subfloat[Validation Accuracy]{
  \includegraphics[scale=.24]{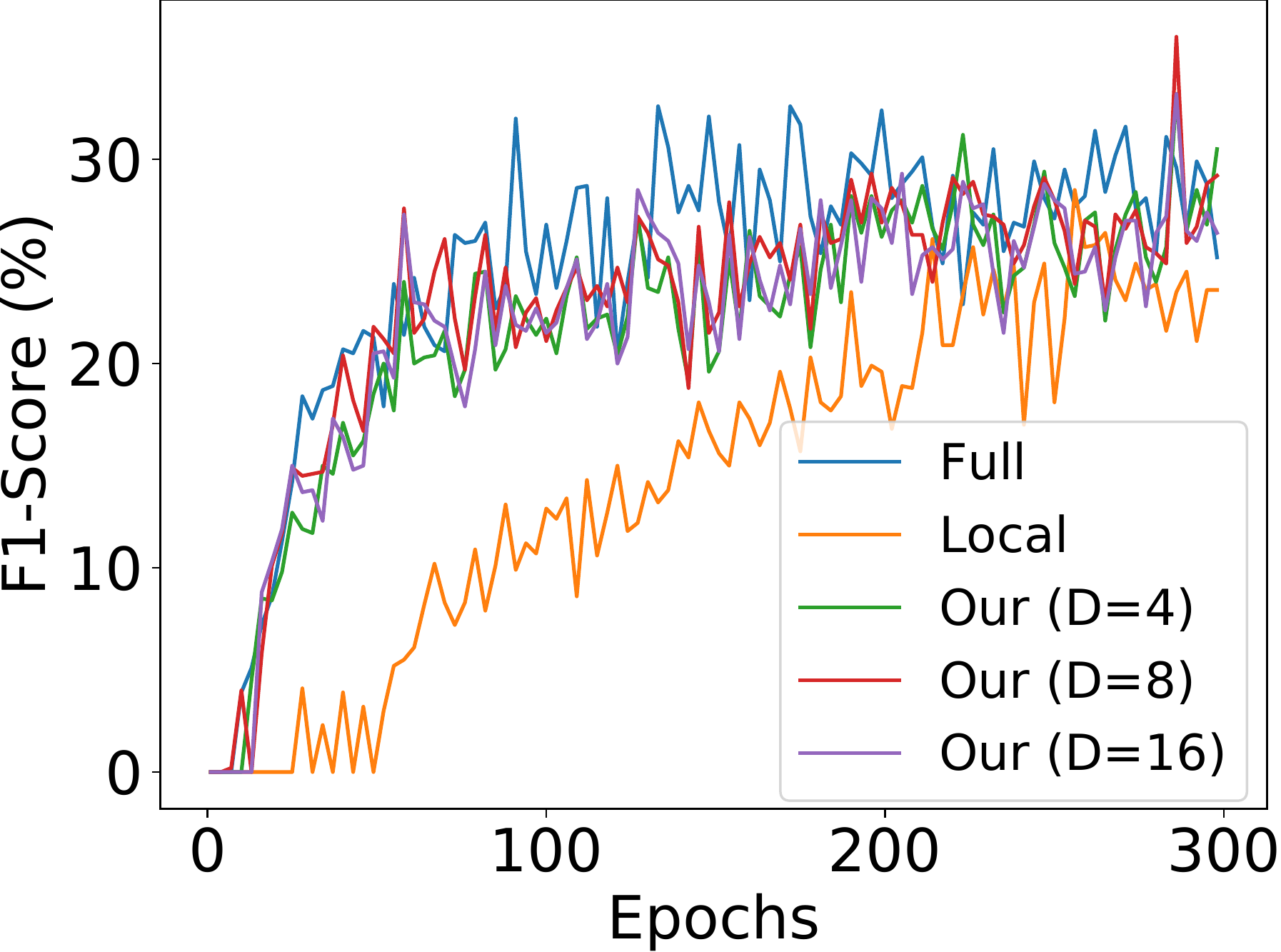}\label{fig:youtube_val}}\hfill
  \subfloat[Execution Time]{
  \includegraphics[scale=.24]{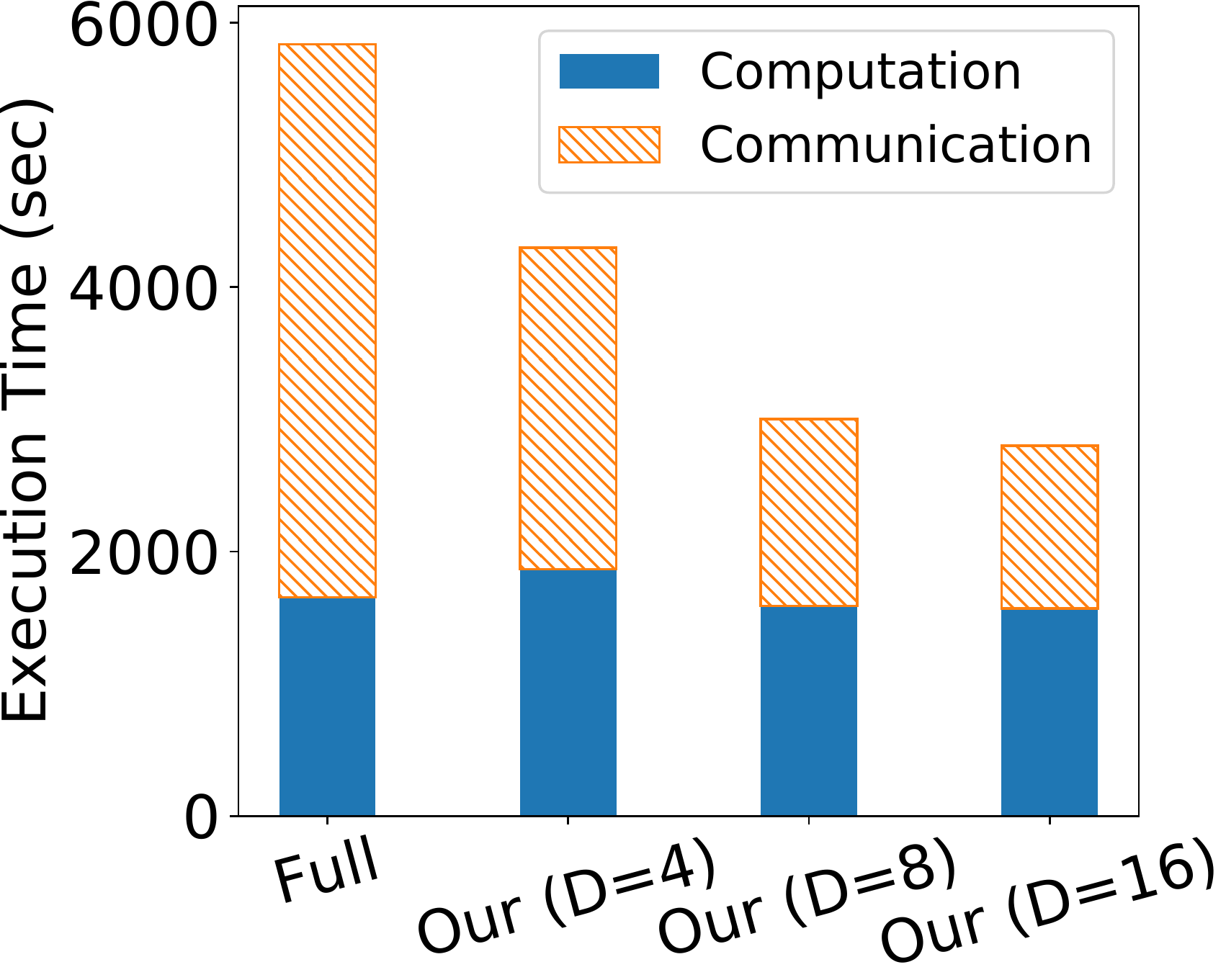}\label{fig:youtube_time}}
  %\vspace{-.8em}
\caption{Results on YouTube graph.}
%\vspace{-.6em}
\label{fig:youtube_results}
\end{figure}

\begin{figure}[t]
\centering
\subfloat[Training Loss]{
  \includegraphics[scale=.24]{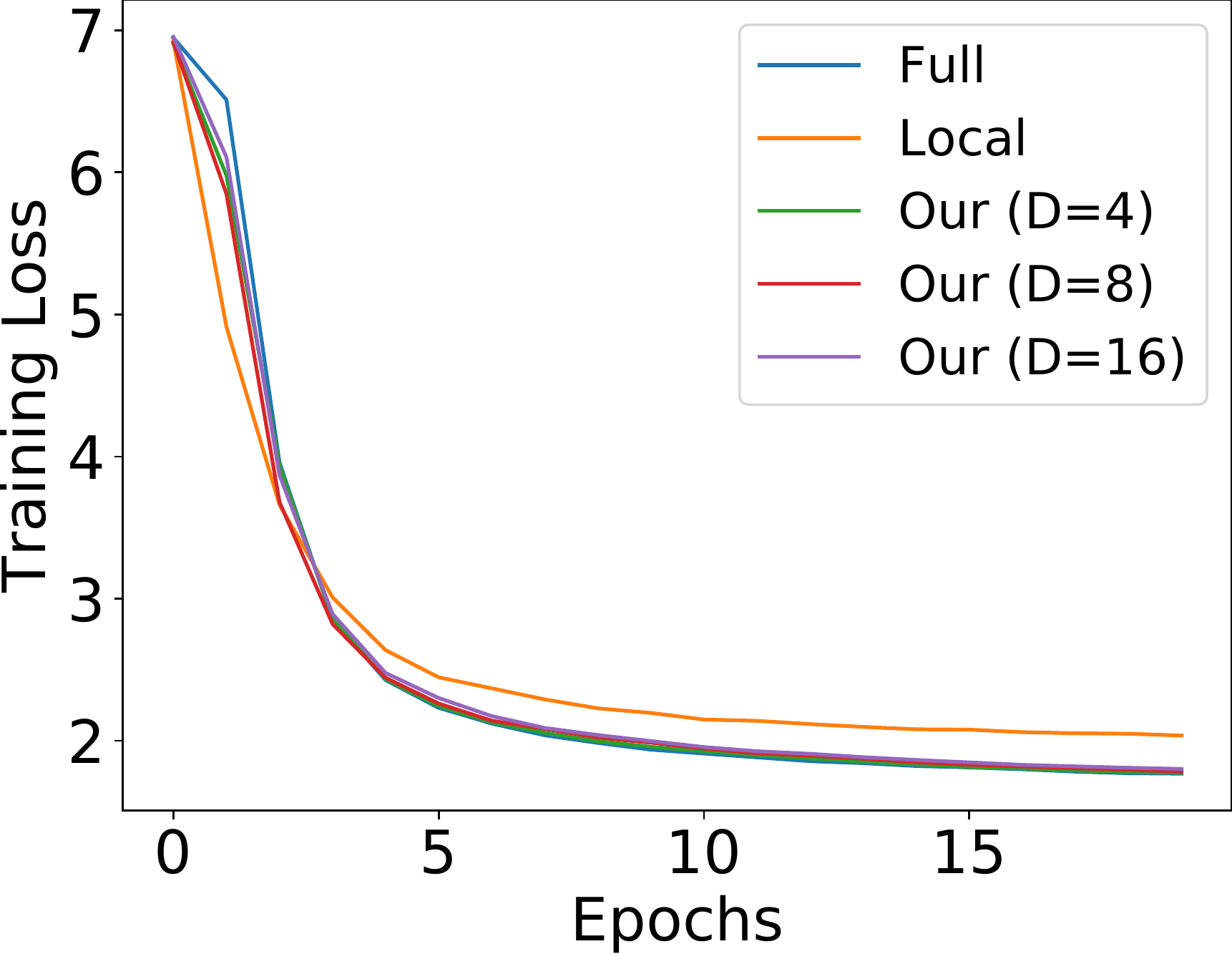}\label{fig:amazon_loss}}\hfill
 \subfloat[Validation Accuracy]{
  \includegraphics[scale=.24]{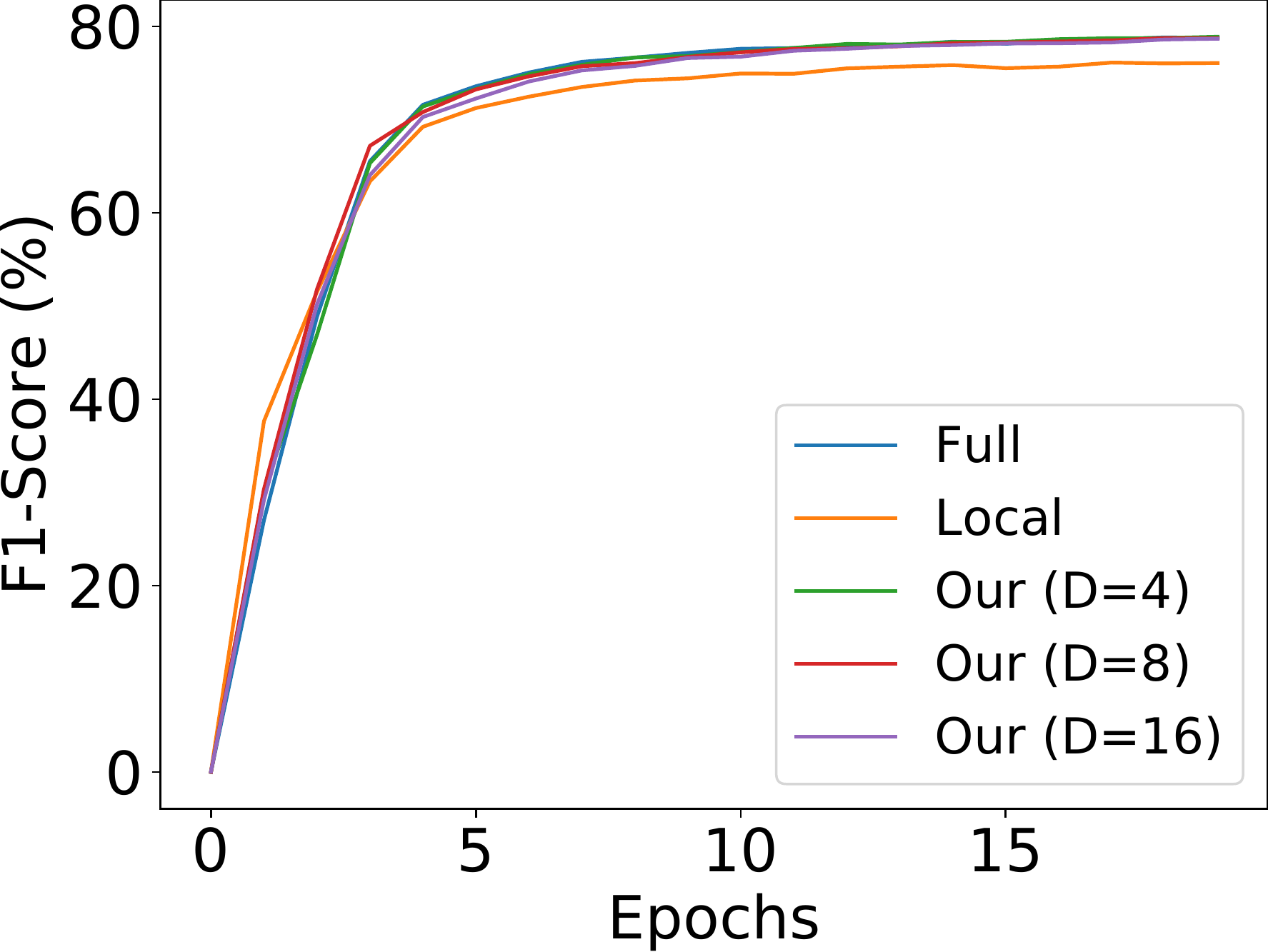}\label{fig:amazon_val}}\hfill
  \subfloat[Execution Time]{
  \includegraphics[scale=.24]{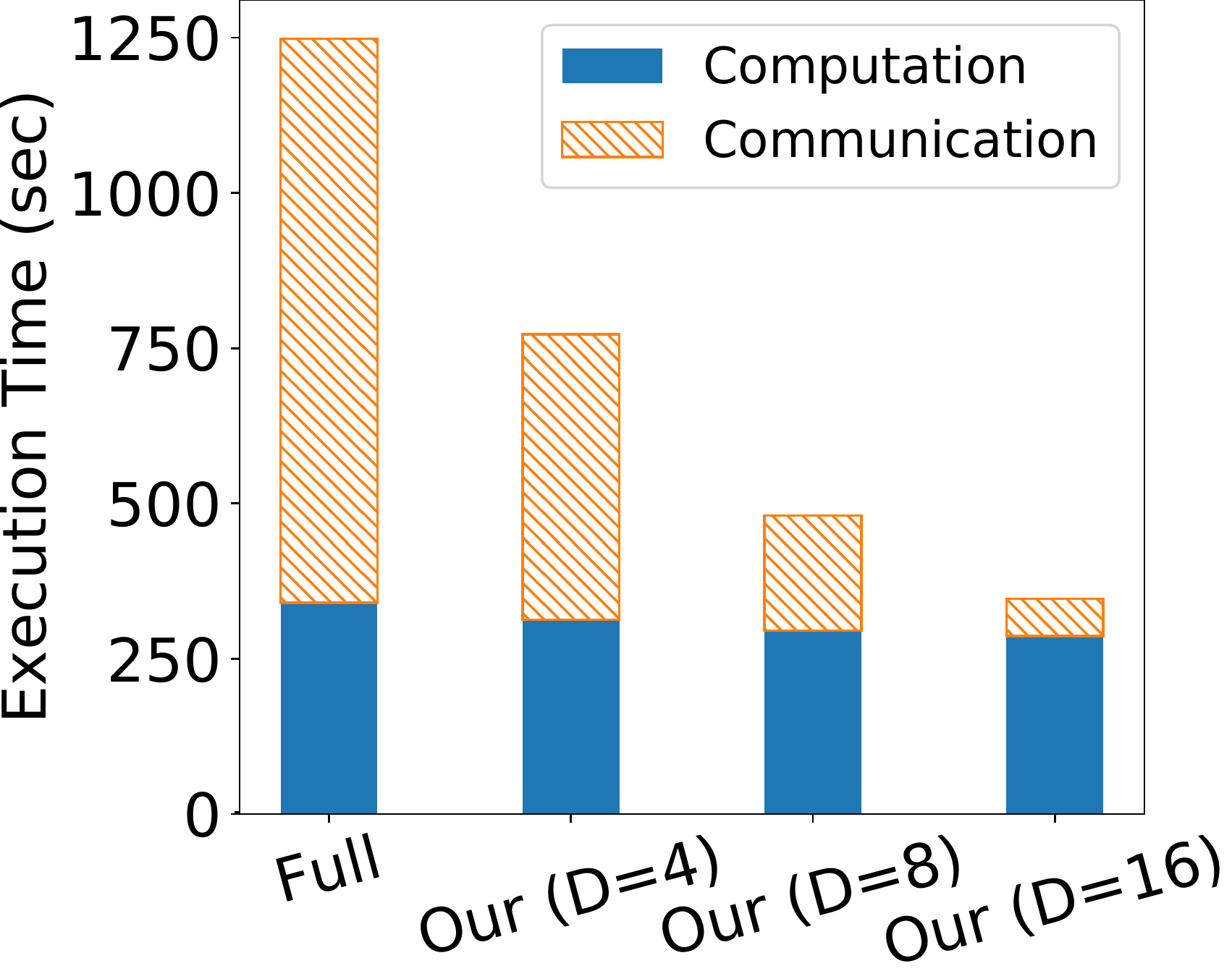}\label{fig:amazon_time}}
  %\vspace{-.8em}
\caption{Results on Amazon graph.}
%\vspace{-.6em}
\label{fig:amazon_results}
\end{figure}

\textbf{Reddit and YouTube Results: } These are two larger graphs for which communication overhead is more critical to the performance of distributed training. 
 Figure~\ref{fig:reddit_results} shows the results on Reddit graph. 
 We run the training for 50 epochs and compare the training loss, validation accuracy and execution time of different versions. 
 The breakdown execution time is shown in Figure~\ref{fig:reddit_time}. 
 We can see that communication takes more than 60\% of the total execution time if we naively adopt the linear weighted sampling method.  
 Our sampling method reduces the communication time by 1.4x, 2.5x and 3.5x with $D$ set to 4, 8, 16, respectively. 
 The actual communication data size is reduced by 2.4x, 3.6x and 5.2x.  
 From Figure~\ref{fig:reddit_loss}, we can see that our sampling method converges at almost the same rate as the full-communication version when $D$ is set to 4 or 8. 
 The training converges slower when $D=16$, due to the large approximation variance. 
 Figure~\ref{fig:reddit_val} shows the validation accuracy of different versions. 
 The best accuracy achieved by full-communication version is 93.0\%. 
 Our sampling method achieves accuracy of 94.3\%, 92.4\%, 92.2\% with $D$ set to 4, 8, 16, respectively. 
 The figures also show that training with local aggregation leads to a significant accuracy loss. 
 
  Figure~\ref{fig:youtube_results} shows the results on YouTube graph. 
 We run the training for 300 epochs. 
 As shown in Figure~\ref{fig:youtube_time}, 
 the communication takes more than 70\% of the total execution time in the full-communication version. 
 Our sampling method effectively reduces the communication time. The larger $D$ we use, the more communication time we save. 
 The actual communication data size is reduced by 3.3x,	4.6x, 6.7x with $D$ set to 4, 8, 16. 
 Despite the communication reduction, our sampling method achieves almost the same convergence as full-communication version, as shown in Figure~\ref{fig:youtube_loss} and~\ref{fig:youtube_val}. 
The full-communication version achieves a best accuracy of 34.0\%, while our sampling method achieves best accuracy of 33.4\%, 36.0\%, 33.2\% with $D$ set to 4, 8, 16. 
In contrast, local aggregation leads to a noticeable accuracy loss. The best accuracy it achieves is 28.5\%.
 
 \begin{wrapfigure}{r}{0.52\textwidth}
\centering
%\vspace{-1em}
\captionof{table}{Execution time of the full-communication version with distributed features on GPUs (Time-Distr) and a centralized version with all features stored on CPU and copied to GPU in each iteration (Time-Centr).}
%\vspace{-.6em}
\small
\begin{tabular}{c|c|c}
{\bf Graph}  & {\bf Time-Distr (sec)} & {\bf Time-Centr (sec)} \\ \hline \hline
Cora     & 4.8                         & 58.1                        \\ \hline
Citeseer & 4.6                         & 59.8                       \\ \hline
Reddit   & 2763.1                      & 6622.0                      \\ \hline
YouTube  & 5852.2                      & 7028.9                      \\ \hline
\end{tabular}
\label{tab:cpu_feature}
\end{wrapfigure}
 
 \textbf{Comparison with Centralized Training: }
 As we described in Section~\ref{sec:background}, the motivation of partitioning the graph and splitting the nodes' features among GPUs in a single machine is that each GPU cannot hold the feature vectors of the entire graph. 
 As oppose to this distributed approach, an alternative implementation is to store all the features on CPU and copy the feature vectors of the sampled nodes to different GPUs in each iteration. 
For example, PinSage~\cite{ying2018graph} adopts this approach for training on large graphs. 
To justify our distributed storage of the feature vectors, we collect the performance results of GCN training with this centralized storage of feature vectors on CPU. 
Table~\ref{tab:cpu_feature} lists the training time for different graphs. 
Even compared with the full-communication baseline in Figure~\ref{fig:reddit_time} and~\ref{fig:youtube_time}, this centralized approach is 1.2x to 13x slower. This is because the centralized approach incurs a large data movement overhead for copying data from CPU to GPU. 
The results suggest that, for training large graphs on a single-machine with multiple GPUs, distributing the feature vectors onto different GPUs is more efficient than storing the graph on CPU.

 \subsection{Results on Multiple Machines}
 
 We incorporate our sampling method to GraphSAINT and train a GCN with Amazon graph on eight machines in a HPC cluster. The subgraph size is set to 4500. 
 %Figure~\ref{fig:amazon_results} shows the results on Amazon graph with the node-wise subgraph sampling method in GraphSAINT~\cite{Zeng2020GraphSAINT}. 
 With our skewed sampling method, a subgraph is more likely to contain nodes on the same machine, and thus incurs less communication among machines. 
 We run the training for 20 epochs. 
 As shown in Figure~\ref{fig:amazon_time}, 
 the communication takes more than 75\% of the total execution time in the full-communication version. 
 Our sampling method effectively reduces the communication overhead. 
 The overall speedup is 1.7x, 2.8x, 4.2x with $D$ set to 4, 8, 16. 
 As shown in Figure~\ref{fig:amazon_loss} and~\ref{fig:amazon_val}, our sampling method achieves almost the same convergence as full-communication version. 
The full-communication version achieves a best accuracy of 79.31\%, while our sampling method achieves best accuracy of 79.5\%, 79.19\%, 79.29\% with $D$ set to 4, 8, 16. 
Although the training seems to converge with local aggregation on this dataset, there is a clear gap between the line of local aggregation and the lines of other versions. The best accuracy achieved by local aggregation is 76.12\%.

\section{Conclusion}
In this work, we study the training of GCNs in a distributed setting. 
We find that the training performance is bottlenecked by the communication of feature vectors among different machines/GPUs. 
Based on the observation, we propose the first communication-efficient sampling method for distributed GCN training. 
The experimental results show that our sampling method effectively reduces the communication overhead while maintaining a good accuracy.

\bibliographystyle{plainnat}
\bibliography{jp.bib}

%\appendix

\end{document}